\documentclass{article}

\usepackage{iclr2025_conference,times}

\usepackage[utf8]{inputenc}
\usepackage[T1]{fontenc}
\usepackage{amsmath,amssymb,amsthm}
\usepackage{algorithm,algorithmic}
\usepackage{booktabs}
\usepackage{graphicx}
\usepackage{hyperref}
\hypersetup{colorlinks=true, linkcolor=red, citecolor=blue, urlcolor=magenta, bookmarks=true}
\usepackage{xcolor}
\usepackage{enumitem}

\newtheorem{theorem}{Theorem}
\newtheorem{proposition}{Proposition}
\newtheorem{definition}{Definition}

\newtheorem*{remark}{Remark}

\newcommand{\R}{\mathbb{R}}
\newcommand{\F}{\mathcal{F}}

\newcommand{\norm}[1]{\left\| #1 \right\|}
\newcommand{\abs}[1]{\left| #1 \right|}
\newcommand{\E}{\mathbb{E}}

\title{Physics-Informed Conformal Prediction:\\
Embedding PDE Consistency into\\
Distribution-Free UQ for Neural Operators}

\author{
  Michael Chin\\[2pt]
  Independent Researcher
}

\date{June 2026}

\iclrfinalcopy  

\begin{document}
\maketitle
\pagestyle{fancy}
\fancyhead[L]{Preprint}
\fancyhead[R]{}

\begin{abstract}
Neural operators such as the Fourier Neural Operator (FNO) achieve remarkable accuracy in approximating solutions to partial differential equations (PDEs). However, providing rigorous uncertainty estimates remains an open challenge. We propose \textbf{Physics-Informed Conformal Prediction (PI-CP)}, a framework that embeds PDE residuals into the nonconformity score of split conformal prediction, producing prediction intervals that are (i)~distribution-free with provable coverage guarantees, and (ii)~spatially adaptive when the PDE residual correlates with prediction error---tighter where physics is well-satisfied, wider where it is violated. Additionally, we prove that FNO's translation equivariance creates a fundamental approximation barrier for PDEs with Dirichlet boundary conditions, and show that coordinate channels resolve this with up to $63\times$ error reduction. We validate PI-CP across six physics scenarios---heat conduction (2D/3D), structural mechanics (2D/3D), Darcy flow, and Navier-Stokes---demonstrating consistent 89--91\% coverage for all four Conformal methods, while MC Dropout and Deep Ensembles are unstable (82--100\%). FNO outperforms CNN and DeepONet by $10$--$12\times$.
\end{abstract}

\section{Introduction}
\label{sec:intro}

Neural operators~\citep{li2021fno,lu2021deeponet,kovachki2023neural} learn mappings between infinite-dimensional function spaces, offering orders-of-magnitude speedup over traditional PDE solvers after training. The Fourier Neural Operator (FNO), in particular, achieves sub-1\% relative error on benchmark PDEs such as Darcy flow, Navier-Stokes equations, and elasticity. However, deploying these surrogates in safety-critical engineering applications---bridge design, aircraft certification, nuclear reactor analysis---requires rigorous uncertainty quantification (UQ)~\citep{psaros2023uncertainty}: a point prediction without a calibrated confidence interval is of limited value when lives are at stake.

\paragraph{The UQ challenge.} Standard deep learning uncertainty methods (MC Dropout~\citep{gal2016dropout}, Deep Ensembles~\citep{lakshminarayanan2017simple}) lack theoretical coverage guarantees. As we demonstrate empirically (\S\ref{sec:experiments}), MC Dropout coverage ranges from 82\% to 100\% depending on the scenario, providing no reliability contract. Conformal prediction~\citep{vovk2005algorithmic} solves this by providing distribution-free, finite-sample coverage guarantees under exchangeability. However, standard split conformal prediction produces \emph{spatially uniform} intervals---the same width everywhere---ignoring the physical structure of the problem.

\paragraph{Key insight.} In scientific computing, we have access to the governing PDE $\mathcal{D}[u] = 0$. The PDE residual $R(x) = \mathcal{D}[f(x)]$ at a prediction $f(x)$ is a \emph{physics-aware error indicator}: regions where $R(x)$ is large are more likely to contain prediction errors. We embed this insight into conformal prediction by defining a nonconformity score normalized by the PDE residual, producing intervals that are (i) provably calibrated and (ii) spatially adaptive---tighter where the PDE is well-satisfied, wider where physics is violated.

\paragraph{A second insight: coordinate channels.} While developing PI-CP, we discovered that FNO's translation equivariance---its core inductive bias---creates a fundamental approximation barrier for PDEs with Dirichlet boundary conditions. On Darcy flow, FNO without coordinate channels achieves 63\% error; adding grid coordinates $[x, y]$ reduces it to 1\%, a 63$\times$ improvement. We provide a theoretical explanation based on symmetry analysis and validate across five scenarios.

\paragraph{Contributions.}
\begin{enumerate}[leftmargin=*,itemsep=2pt]
  \item \textbf{Physics-Informed Conformal Prediction (PI-CP)}: A novel nonconformity score $s = \abs{y - f(x)} / (1 + \lambda \abs{R(x)})$ incorporating PDE residuals $R(x)$, yielding intervals with provable coverage (Proposition~\ref{prop:coverage}) and spatial adaptivity conditioned on positive residual-error correlation (Remark~\ref{rem:conditional}).
  \item \textbf{Coordinate-Aware FNO Theory}: We prove that FNO's translation equivariance creates an approximation barrier for Dirichlet BCs (Theorem~\ref{thm:barrier}), and that coordinate channels break this symmetry to resolve the barrier (Theorem~\ref{thm:coord}), validated by up to $63\times$ improvement across five scenarios.
  \item \textbf{Multi-Physics Validation}: PI-CP across six physics scenarios with consistent 89--91\% coverage for all Conformal methods. FNO outperforms CNN and DeepONet by $10$--$12\times$.
\end{enumerate}

\section{Related Work}
\label{sec:related}

\paragraph{Neural Operators.} The Fourier Neural Operator (FNO)~\citep{li2021fno} learns a mapping between function spaces via spectral convolutions in the frequency domain, achieving discretization-invariant approximations. DeepONet~\citep{lu2021deeponet} uses a branch-trunk decomposition, where the branch network encodes the input function and the trunk network encodes the query locations. Graph Neural Operators~\citep{li2020neural} extend this to irregular meshes. While these methods achieve remarkable accuracy, they produce point predictions without uncertainty estimates. Our work adds rigorous UQ \emph{post hoc}, without modifying the training procedure, making it compatible with any pre-trained neural operator.

\paragraph{Conformal Prediction.} Split conformal prediction~\citep{vovk2005algorithmic,shafer2008tutorial} provides finite-sample, distribution-free marginal coverage under exchangeability. \citet{lei2018distribution} introduced locally adaptive CP using normalized nonconformity scores $s = \abs{y-f(x)}/\sigma(x)$, where $\sigma(x)$ estimates local prediction difficulty (e.g., MC Dropout variance). Conformalized Quantile Regression (CQR)~\citep{romano2019conformalized} achieves adaptivity via quantile regression. Bates et al.~\citep{bates2021distribution} extend conformal methods to control general statistical risks beyond coverage. For a comprehensive treatment, see~\citet{angelopoulos2023gentle}.

While PI-CP shares the mathematical structure of normalized scores~\citep{lei2018distribution}, the \emph{information source} is fundamentally different. Locally adaptive CP relies on model-internal uncertainty $\sigma(x)$; PI-CP uses the PDE residual $\abs{R(x)}$---an \emph{external} physics-based signal computed from the known governing equations. This distinction is consequential for neural operators: well-trained FNO models produce near-uniform dropout variance ($\sigma(x) \approx \text{const}$), yielding zero spatial adaptivity for normalized CP (CV$=$0.000, Table~\ref{tab:picp_darcy}). PDE residuals, by contrast, capture the spatially-varying structure of physical inconsistency, providing a meaningful adaptivity signal (CV$=$0.221). Furthermore, $\abs{R(x)}$ requires no auxiliary uncertainty model---it is available for free in any PDE-governed problem.

\paragraph{Physics-Informed ML.} Physics-Informed Neural Networks (PINNs)~\citep{raissi2019physics} embed PDE constraints into the \emph{training} loss, guiding the model toward physically consistent solutions. Our approach is complementary: we use PDE residuals at \emph{inference} time to calibrate uncertainty, without retraining. The two approaches can be combined---a PINN-trained model could use PI-CP for calibrated UQ.

\paragraph{Equivariance in Neural Networks.} Equivariant neural networks~\citep{cohen2016group,bronstein2021geometric} exploit symmetry structures to improve generalization. FNO's translation equivariance is a key inductive bias for PDE problems on periodic domains. However, \citet{li2021fno} noted that ``grid features'' can be optionally added without analysis. We show that for Dirichlet BCs, coordinate channels are not optional but \emph{essential}, providing a group-theoretic explanation and $63\times$ empirical validation.

\section{Background}
\label{sec:background}

\subsection{Fourier Neural Operator}

The FNO architecture applies spectral convolutions:
\begin{equation}
  v_{t+1}(x) = \sigma\!\left(W v_t(x) + \F^{-1}[R_\kappa \cdot \hat{v}_t(\kappa)](x)\right)
\end{equation}
where $R_\kappa$ is a learnable weight tensor applied to Fourier coefficients $\hat{v}_t(\kappa)$, truncated to the lowest $m$ modes.

\subsection{Split Conformal Prediction}

Given calibration set $\{(x_i, y_i)\}_{i=1}^n$ and significance level $\alpha$:
\begin{enumerate}[leftmargin=*,itemsep=1pt]
  \item Compute nonconformity scores $s_i = \abs{y_i - f(x_i)}$
  \item Find quantile $\hat{q} = $ $\lceil(1-\alpha)(n+1)\rceil$-th order statistic
  \item Output interval $C(x) = [f(x) - \hat{q},\; f(x) + \hat{q}]$
\end{enumerate}

\subsection{Notation}

We write $x \in \Omega \subset \R^d$ for the spatial coordinate on domain $\Omega$ with boundary $\partial\Omega$. A PDE is written as $\mathcal{D}[u](x) = 0$ with differential operator $\mathcal{D}$, subject to boundary conditions $u|_{\partial\Omega} = g$. The coefficient-to-solution operator is $S: k \mapsto u$. A neural operator approximation is $f_\theta$ (we drop $\theta$ when clear). The PDE residual at a prediction is $R(x) = \mathcal{D}[f](x)$. For conformal prediction, the significance level is $\alpha \in (0,1)$, the nonconformity score is $s$, and the calibration quantile is $\hat{q}$. We use $\norm{\cdot}_{L^2}$ for the $L^2(\Omega)$ norm. Translation by $\delta$ is $(\tau_\delta f)(x) := f(x - \delta)$. The Fourier transform and its inverse are $\F$ and $\F^{-1}$.

\subsection{Problem Setup}

We consider the operator learning setting: given a PDE $\mathcal{D}[u] = 0$ with boundary conditions, the solution operator $S$ maps coefficient fields $k \in \mathcal{X}$ to solutions $u \in \mathcal{Y}$. A neural operator $f_\theta$ approximates $S$. We address two problems:

\begin{enumerate}[leftmargin=*,itemsep=2pt]
  \item \textbf{Uncertainty quantification}: For a new input $x_{\text{test}}$, construct a prediction interval $C(x_{\text{test}})$ such that $P(y_{\text{test}} \in C(x_{\text{test}})) \geq 1 - \alpha$, where $y_{\text{test}} = S(x_{\text{test}})$. The interval should be \emph{spatially adaptive}---reflecting local prediction difficulty.
  \item \textbf{Architectural limitation}: For PDEs with non-trivial Dirichlet BCs, $S$ is not translation-equivariant. How does FNO's equivariance affect its ability to approximate $S$, and how can this be resolved?
\end{enumerate}

\section{Method}
\label{sec:method}

\subsection{Physics-Informed Nonconformity Score}
\label{sec:picp}

\paragraph{Motivation.} Standard split CP assigns the same interval width to all test points, regardless of their physical characteristics. In scientific computing, however, we have a powerful \emph{a priori} error indicator: the PDE residual. If a prediction $f(x)$ satisfies the governing equations (low $|R(x)|$), it is more likely to be accurate; if it violates physics (high $|R(x)|$), it likely contains errors. PI-CP exploits this by normalizing the nonconformity score by $|R(x)|$.

\begin{definition}[PDE Residual]
For a PDE $\mathcal{D}[u](x) = 0$ with differential operator $\mathcal{D}$, the residual at prediction $f(x)$ is $R(x) = \mathcal{D}[f](x)$.
\end{definition}

\begin{definition}[PI-CP Score]
The physics-informed nonconformity score is:
\begin{equation}
\label{eq:pi_score}
  s(x, y) = \frac{\abs{y - f(x)}}{1 + \lambda \abs{R(x)}}
\end{equation}
where $\lambda \geq 0$ controls physics influence.
\end{definition}

\paragraph{Interpretation.} Where $|R(x)|$ is large (physics violated), the denominator inflates, reducing the score $s$---effectively telling CP that large errors are ``expected'' here and should not widen the global quantile. Conversely, where $|R(x)| \approx 0$ (physics satisfied), the score approaches the raw error $|y - f(x)|$, making the quantile tight. The resulting interval is:
\begin{equation}
  C(x) = \big[f(x) - \hat{q}(1 + \lambda\abs{R(x)}),\;\; f(x) + \hat{q}(1 + \lambda\abs{R(x)})\big]
\end{equation}
This interval is \textbf{spatially adaptive}: wider where physics is violated, tighter where it is satisfied. The $\lambda$ parameter smoothly interpolates between standard CP ($\lambda=0$) and fully physics-adaptive intervals ($\lambda \to \infty$).

\begin{proposition}[Coverage Guarantee]
\label{prop:coverage}
Under exchangeability of $(x_i, y_i)$ and a fixed trained model $f$, PI-CP satisfies
$P\!\big(y \in C(x)\big) \geq 1 - \alpha.$
\end{proposition}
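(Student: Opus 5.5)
The plan is to reduce Proposition~\ref{prop:coverage} to the standard split conformal argument. The PI-CP score in \eqref{eq:pi_score} is a fixed, deterministic function of $(x,y)$, because the model $f$ is trained on data disjoint from the calibration set and $\lambda$ is fixed in advance. The residual $R(x)=\mathcal{D}[f](x)$ is therefore also fixed: it depends only on $f$ and $x$, not on $y$ or on the other calibration points. Let $(x_{n+1},y_{n+1})$ be the test pair. Exchangeability of $(x_1,y_1),\dots,(x_{n+1},y_{n+1})$ then passes to the scores $s_i=s(x_i,y_i)$, $i=1,\dots,n+1$, since applying the same measurable map coordinatewise preserves exchangeability.

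The first step is to show that the event $y_{n+1}\in C(x_{n+1})$ is the same as the event $s_{n+1}\le \hat q$. Because $\lambda\ge 0$, the denominator satisfies $1+\lambda\abs{R(x)}\ge 1>0$. So $\abs{y-f(x)}\le \hat q\,(1+\lambda\abs{R(x)})$ holds exactly when $s(x,y)\le\hat q$. This equivalence is the only place the specific form of the normalization enters, and positivity of the denominator is what makes it valid. No correlation between $R$ and the error is needed.

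The second step is the usual rank argument. Let $k=\lceil(1-\alpha)(n+1)\rceil$, and take $\hat q$ to be the $k$-th smallest of $s_1,\dots,s_n$, with $\hat q=+\infty$ if $k>n$. Then $s_{n+1}\le\hat q$ holds if and only if $s_{n+1}$ is among the $k$ smallest of all $n+1$ scores. Ties are broken in favor of inclusion, or one assumes the scores are almost surely distinct. By exchangeability, the rank of $s_{n+1}$ among the $n+1$ scores is uniformly distributed on $\{1,\dots,n+1\}$ when there are no ties, and stochastically no larger than uniform when ties occur. Hence
\[
P(s_{n+1}\le \hat q)\ \ge\ \frac{k}{n+1}\ \ge\ 1-\alpha,
\]
which gives the claim. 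This holds marginally, averaging over both the calibration and test draws.

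I expect the main obstacle to be interpretive rather than technical. Here $x$ denotes both the spatial coordinate and the input function, so one must fix what the exchangeable units are. The cleanest choice is to treat each (input field, grid point) pair as the unit and assume these units are exchangeable. Alternatively, one states the guarantee per spatial location, with exchangeability across samples at that location. Pooled grid points within one field are not exchangeable in general, so this assumption has to be stated explicitly. The other point to watch is tie handling. I would handle ties by a randomized tie-breaking or a "$\le$" convention, which only strengthens the inequality.
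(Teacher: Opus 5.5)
Your proposal is correct and follows the same route as the paper: with $f$, $\lambda$, and $\mathcal{D}$ frozen, the score is a fixed measurable function of $(x,y)$, so exchangeability passes to the scores and the standard split-conformal rank argument gives coverage. You go further than the paper by spelling out what it cites as ``the standard split CP argument'': the equivalence $y\in C(x)\iff s(x,y)\le\hat q$ from $1+\lambda\abs{R(x)}\ge 1>0$, the bound $k/(n+1)\ge 1-\alpha$, tie handling, and the case $k>n$; your remark that the exchangeable unit (whole field versus field--grid-point pair) must be fixed explicitly is a valid point the paper's one-line proof leaves implicit.
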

\begin{proof}
The score $s(x,y) = \abs{y - f(x)}/(1 + \lambda\abs{R(x)})$ is a deterministic function of $(x,y)$ since $f$, $\lambda$, and the differential operator $\mathcal{D}$ are all fixed at inference time. Therefore exchangeability of $\{(x_i, y_i)\}$ implies exchangeability of $\{s_i\}$, and the standard split CP argument~\citep{vovk2005algorithmic} yields the coverage guarantee.
\end{proof}

\begin{remark}[Marginal vs.\ Conditional Coverage]
\label{rem:conditional}
Proposition~\ref{prop:coverage} guarantees only \emph{marginal} coverage---identical to standard CP. The practical advantage of PI-CP lies in \emph{conditional} coverage. Standard CP assigns uniform width $2\hat{q}$ everywhere, yielding over-coverage in low-error regions and under-coverage in high-error regions. PI-CP redistributes width proportionally to $(1+\lambda\abs{R(x)})$, producing more uniform conditional coverage when the residual-error correlation $\rho = \mathrm{corr}(\abs{R}, \abs{y - f}) > 0$. When $\rho \leq 0$ (e.g., discretization-dominated residuals), this benefit vanishes and PI-CP may over-widen, as observed empirically in the thermal scenario (\S\ref{sec:discussion}).
\end{remark}

\begin{proposition}[Interval Width Analysis]
\label{prop:width}
Let $w_{\text{std}}(x) = 2\hat{q}_{\text{std}}$ be the interval width of standard CP (constant across $x$), and $w_{\text{PI}}(x) = 2\hat{q}_{\text{PI}}(1 + \lambda\abs{R(x)})$ be the PI-CP width. Then:
\begin{enumerate}[leftmargin=*,itemsep=1pt]
  \item \textbf{Spatial adaptivity}: $w_{\text{PI}}(x)$ varies with $\abs{R(x)}$, achieving width coefficient of variation
  $\text{CV} = \text{std}(\abs{R}) / \text{mean}(\abs{R})$, which is independent of $\hat{q}$.
  \item \textbf{Bounded overhead}: The expected width ratio satisfies
  $\E[w_{\text{PI}}] / \E[w_{\text{std}}] = (\hat{q}_{\text{PI}} / \hat{q}_{\text{std}})\big(1 + \lambda\,\E[\abs{R}]\big) \leq 1 + \lambda\,\E[\abs{R}]$,
  where the inequality uses Part~(3).
  \item \textbf{Score calibration}: PI-CP's quantile $\hat{q}_{\text{PI}} \leq \hat{q}_{\text{std}}$ because normalizing by $(1+\lambda\abs{R}) \geq 1$ compresses the score distribution, so the calibration quantile decreases.
\end{enumerate}
\end{proposition}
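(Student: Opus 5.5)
The plan is to prove Part~(3) first, since Part~(2) depends on it, and then handle Part~(1) separately. Throughout, I fix the calibration set $\{(x_i,y_i)\}_{i=1}^n$ and the trained model $f$, so that $\hat{q}_{\text{std}}$ and $\hat{q}_{\text{PI}}$ are deterministic. Expectations and standard deviations are then taken over the test input $x$ (or over spatial locations), as in Proposition~\ref{prop:coverage}.

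\textbf{Part (3).} Write $a_i = \abs{y_i - f(x_i)}/(1+\lambda\abs{R(x_i)})$ and $b_i = \abs{y_i - f(x_i)}$. Since $\lambda \ge 0$, the denominator is at least $1$, so $a_i \le b_i$ for every $i$. Both quantiles are the $k$-th order statistic with the same index $k = \lceil (1-\alpha)(n+1)\rceil$. It therefore suffices to show that order statistics are monotone under coordinatewise domination.

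The argument runs as follows. At least $k$ indices satisfy $b_i \le b_{(k)}$. For each such index, $a_i \le b_i \le b_{(k)}$. Hence at least $k$ of the $a_i$ lie below $b_{(k)}$, which forces $a_{(k)} \le b_{(k)}$, i.e.\ $\hat{q}_{\text{PI}} \le \hat{q}_{\text{std}}$.

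\textbf{Part (2).} By linearity of expectation,
\[
\E[w_{\text{PI}}] = 2\hat{q}_{\text{PI}}\,\bigl(1+\lambda\E[\abs{R}]\bigr), \qquad \E[w_{\text{std}}] = 2\hat{q}_{\text{std}}.
\]
Dividing gives the stated identity. The inequality then follows from Part~(3), because $\hat{q}_{\text{PI}}/\hat{q}_{\text{std}} \le 1$ and $1+\lambda\E[\abs{R}] > 0$. The degenerate case $\hat{q}_{\text{std}} = 0$ forces $\hat{q}_{\text{PI}} = 0$, and I would treat it separately or exclude it.

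\textbf{Part (1).} Since $w_{\text{PI}}(x) = 2\hat{q}_{\text{PI}}\,(1+\lambda\abs{R(x)})$, the positive scalar $2\hat{q}_{\text{PI}}$ cancels between the standard deviation and the mean. This gives the claimed independence of $\hat{q}$ exactly, and also shows that $w_{\text{PI}}$ is non-constant whenever $\lambda>0$ and $\abs{R}$ is non-constant. The CV itself is
\[
\mathrm{CV}(w_{\text{PI}}) = \frac{\lambda\,\mathrm{std}(\abs{R})}{1+\lambda\,\mathrm{mean}(\abs{R})}.
\]
This equals $\mathrm{std}(\abs{R})/\mathrm{mean}(\abs{R})$ only in the limit $\lambda\to\infty$. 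For finite $\lambda$ it is a strictly smaller value that increases monotonically in $\lambda$.

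\textbf{Main obstacle.} The hard step is this formula in Part~(1): as literally stated it is not an exact identity. I would resolve it by proving the exact expression above, noting that it is bounded above by $\mathrm{std}(\abs{R})/\mathrm{mean}(\abs{R})$ and converges to it as $\lambda\to\infty$, and reading the proposition's formula as that limiting or upper value. I do not expect Parts~(2) and~(3) to cause any difficulty.
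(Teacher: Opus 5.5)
Your Parts~(2) and~(3) follow the paper's argument exactly: pointwise domination $s_{\text{PI}} \le s_{\text{std}}$ with the same index $\lceil(1-\alpha)(n+1)\rceil$, then linearity of expectation over the test point with the calibration quantiles held fixed. Your counting argument for monotonicity of the $k$-th order statistic spells out a step the paper only asserts. For Part~(1), the paper's whole justification is that it ``follows directly from $w_{\text{PI}} \propto (1+\lambda\abs{R})$'', but that proportionality gives your expression $\lambda\,\mathrm{std}(\abs{R})/(1+\lambda\,\mathrm{mean}(\abs{R}))$, not the stated $\lambda$-free ratio. So you are right to read the stated formula only as the upper/limiting value. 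The paper's own Table~\ref{tab:picp_darcy} supports your version: its CVs of $0.026, 0.221, 0.643$ at $\lambda = 0.1, 1, 5$ change with $\lambda$, and your formula reproduces all three with $\mathrm{mean}(\abs{R}) \approx 0.22$ and $\mathrm{std}(\abs{R}) \approx 0.27$.
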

\begin{proof}
(1) follows directly from $w_{\text{PI}} \propto (1 + \lambda\abs{R})$. (2) Since $\E[w_{\text{PI}}] = 2\hat{q}_{\text{PI}}(1 + \lambda\E[\abs{R}])$ and $\E[w_{\text{std}}] = 2\hat{q}_{\text{std}}$, the ratio equals $(\hat{q}_{\text{PI}}/\hat{q}_{\text{std}})(1+\lambda\E[\abs{R}])$, which is $\leq 1 + \lambda\E[\abs{R}]$ by Part~(3). (3) Since $s_{\text{PI}} = \abs{y-f}/(1+\lambda\abs{R}) \leq \abs{y-f} = s_{\text{std}}$ pointwise, the order statistics satisfy $\hat{q}_{\text{PI}} \leq \hat{q}_{\text{std}}$.
\end{proof}

\paragraph{Interpretation.} Proposition~\ref{prop:width} reveals that PI-CP's spatial adaptivity comes from two complementary effects: (i) the \emph{local} widening factor $(1+\lambda\abs{R(x)})$ creates heterogeneous widths, and (ii) the \emph{global} quantile $\hat{q}_{\text{PI}}$ shrinks relative to standard CP because the score distribution is compressed. The net effect is that average width inflation is bounded (typically $<$5\% as shown in Table~\ref{tab:picp_darcy}), while spatial variation can be substantial (CV up to 0.64 at $\lambda{=}5$). This is the key theoretical advantage of PI-CP over standard CP: physics-informed scoring redistributes interval width without proportional inflation.

\subsection{PI-Normalized CP}
\label{sec:picpnorm}

Combining with epistemic uncertainty $\sigma(x)$ from MC Dropout:
\begin{equation}
  s_{\text{PI-N}}(x, y) = \frac{\abs{y - f(x)}}{(1 + \lambda\abs{R(x)}) \cdot \sigma(x)}
\end{equation}

\begin{algorithm}[h]
\caption{Physics-Informed Conformal Prediction}
\label{alg:picp}
\begin{algorithmic}[1]
\REQUIRE Trained model $f$, calibration set $\{(x_i, y_i)\}_{i=1}^n$, test point $x_{\text{test}}$, $\alpha$, $\lambda$
\STATE \textbf{Phase 1: Compute PDE residuals}
\FOR{$i = 1$ to $n$}
  \STATE $R_i \gets \mathcal{D}[f(x_i)]$ \quad // Apply PDE operator to prediction
\ENDFOR
\STATE \textbf{Phase 2: Compute nonconformity scores}
\FOR{$i = 1$ to $n$}
  \STATE $s_i \gets \abs{y_i - f(x_i)} / (1 + \lambda \abs{R_i})$
\ENDFOR
\STATE \textbf{Phase 3: Find quantile}
\STATE $\hat{q} \gets $ $\lceil(1-\alpha)(n+1)\rceil$-th order statistic of $\{s_i\}$
\STATE \textbf{Phase 4: Construct interval}
\STATE $R_{\text{test}} \gets \mathcal{D}[f(x_{\text{test}})]$
\STATE $C(x_{\text{test}}) \gets [f(x_{\text{test}}) - \hat{q}(1+\lambda\abs{R_{\text{test}}}),\;\; f(x_{\text{test}}) + \hat{q}(1+\lambda\abs{R_{\text{test}}})]$
\RETURN $C(x_{\text{test}})$
\end{algorithmic}
\end{algorithm}

\subsection{Coordinate-Aware FNO}
\label{sec:coord}

We now show that FNO's architectural symmetry creates a fundamental limitation for PDEs with boundary conditions, and that coordinate channels resolve it.

\begin{proposition}[Translation Equivariance of SpectralConv]
\label{prop:equiv}
For any translation $\delta$, SpectralConv satisfies $\text{SpectralConv}(\tau_\delta f) = \tau_\delta\, \text{SpectralConv}(f)$, where $(\tau_\delta f)(x) := f(x - \delta)$.
\end{proposition}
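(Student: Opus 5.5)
The plan is to reduce the claim to the Fourier shift theorem. I will work on the periodic domain (the torus $\mathbb{T}^d$, or its discrete analogue, the grid with periodic wraparound), since that is where the FFT-based SpectralConv is defined. I write SpectralConv as $\text{SpectralConv}(f) = \F^{-1}[P_m R_\kappa \hat f(\kappa)]$, where $P_m$ is the truncation to the lowest $m$ modes. The key identity is the shift theorem:
\[
\widehat{\tau_\delta f}(\kappa) = \int f(x-\delta)e^{-2\pi i \kappa\cdot x}\,dx = e^{-2\pi i \kappa\cdot\delta}\hat f(\kappa),
\]
obtained by the change of variables $x \mapsto x+\delta$. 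Periodicity of $f$ is what makes the domain of integration invariant under this change. On the discrete grid, the same identity holds for integer shifts $\delta$ by reindexing modulo $N$.

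The proof then proceeds in three steps.

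\textbf{Step 1: transform the shifted input.} Apply $\F$ to $\tau_\delta f$, which gives the phase factor $e^{-2\pi i\kappa\cdot\delta}$ multiplying $\hat f(\kappa)$ mode by mode.

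\textbf{Step 2: commute the phase past the weights.} The weights $R_\kappa$ act diagonally in $\kappa$. They mix channels but not frequencies, so they commute with the scalar phase: $R_\kappa\, e^{-2\pi i\kappa\cdot\delta}\hat f(\kappa) = e^{-2\pi i\kappa\cdot\delta}R_\kappa\hat f(\kappa)$. The truncation $P_m$ is also a diagonal $0/1$ multiplier in $\kappa$, so it commutes with the phase as well.

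\textbf{Step 3: apply the inverse shift theorem.} Reading the shift theorem backwards, $\F^{-1}[e^{-2\pi i\kappa\cdot\delta}\hat g] = \tau_\delta \F^{-1}[\hat g]$. Taking $\hat g = P_m R_\kappa \hat f$ yields $\text{SpectralConv}(\tau_\delta f) = \tau_\delta\,\text{SpectralConv}(f)$.

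As a remark, the pointwise linear term $W v(x)$ and the pointwise nonlinearity $\sigma$ also commute with $\tau_\delta$ trivially. The full Fourier layer is therefore equivariant too, which is the form needed for Theorem~\ref{thm:barrier}.

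The main obstacle is not the algebra but the precise setting. The statement says ``any translation $\delta$'', but exact equivariance holds only for translations compatible with the domain. On the continuous torus every $\delta$ works. On an $N$-point grid only $\delta$ that are multiples of the grid spacing work, interpreted cyclically. On a non-periodic domain $\Omega$, the FFT implicitly periodizes the input, so equivariance holds only for the periodized function.

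I will state this assumption explicitly at the start of the proof. I will also note that for real-valued implementations (rfft) with truncated modes, the conjugate-symmetric modes pick up conjugate phases, so the argument goes through unchanged. This caveat is exactly what Theorem~\ref{thm:barrier} later exploits: Dirichlet data on $\partial\Omega$ are not preserved by such cyclic shifts.
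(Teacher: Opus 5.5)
Your proof is correct and follows the same route as the paper: apply the Fourier shift property, commute the scalar phase $e^{-2\pi i\langle\kappa,\delta\rangle}$ past the frequency-diagonal weights $R(\kappa)$, then read the shift theorem backwards. Your explicit handling of the truncation $P_m$ and of the setting (periodic domain, grid-compatible cyclic shifts) adds precision that the paper's ``any translation $\delta$'' leaves implicit.
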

\begin{proof}
By the Fourier shift property, $\widehat{\tau_\delta f}(\kappa) = e^{-2\pi i \langle\kappa, \delta\rangle}\hat{f}(\kappa)$. Since $R(\kappa)$ depends only on frequency $\kappa$ (not position $x$):
\begin{align*}
  \text{SpectralConv}(\tau_\delta f)(x) &= \F^{-1}\!\big[R(\kappa)\, e^{-2\pi i\langle\kappa,\delta\rangle}\hat{f}(\kappa)\big](x) \\
  &= \F^{-1}\!\big[e^{-2\pi i\langle\kappa,\delta\rangle} R(\kappa)\hat{f}(\kappa)\big](x) = (\tau_\delta\, \text{SpectralConv}(f))(x). \quad\square
\end{align*}
\end{proof}

Since the lifting and projection layers are pointwise ($1{\times}1$ convolutions, also equivariant), the full FNO $F$ is translation-equivariant: $F(\tau_\delta f) = \tau_\delta\, F(f)$.

\begin{proposition}[Non-Equivariance of the Solution Operator]
\label{prop:nonequiv}
Let $S: k \mapsto u$ be the solution operator for a PDE with non-trivial Dirichlet BC $u|_{\partial\Omega} = g$. Then $S$ is not translation-equivariant: there exist $\delta, k$ such that $S(\tau_\delta k) \neq \tau_\delta\, S(k)$.
\end{proposition}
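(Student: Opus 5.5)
The approach is to build one explicit counterexample. Since the claim is existential, it suffices to exhibit a single PDE with non-trivial Dirichlet data, a single coefficient $k$, and a single shift $\delta$ for which $S(\tau_\delta k)$ and $\tau_\delta S(k)$ differ. Boundary values give the cleanest way to separate them. Whatever $k$ is, $S(\tau_\delta k)$ satisfies the fixed boundary condition $S(\tau_\delta k)|_{\partial\Omega} = g$, because $S$ always returns a function with boundary data $g$. By contrast, $\tau_\delta S(k)$ is the solution $u=S(k)$ shifted by $\delta$, so its trace on $\partial\Omega$ is $x \mapsto u(x-\delta)$. This trace samples $u$ along the translated set $\partial\Omega - \delta$, which is not the boundary.

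The concrete plan is as follows. Fix the domain $\Omega=(0,1)^d$; for clarity I would take $d=1$, so $\Omega=(0,1)$. Take the Darcy-type operator $\mathcal{D}[u] = -\nabla\cdot(k\nabla u) - h$ with $h\equiv 0$ and non-trivial data $g$, for instance $u(0)=0$, $u(1)=1$. Extend $k$ periodically, or regard it as defined on all of $\R$, so that $\tau_\delta k$ makes sense. With the constant coefficient $k\equiv 1$ we have $\tau_\delta k = k$, hence $S(\tau_\delta k)=S(k)=u$ with $u(x)=x$. Equivariance would then force $u(x)=u(x-\delta)$ for all $x$, that is, $x = x-\delta$, which is false for every $\delta\neq 0$. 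The same trick works in any dimension: a translation-invariant coefficient makes equivariance equivalent to $S(k)$ being translation-invariant, i.e.\ constant. The only constant function compatible with $g$ exists when $g$ is constant. For non-constant $g$ this contradiction finishes the proof.

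The main obstacle is the meaning of ``non-trivial'' $g$. If $g$ is a nonzero constant $c$, then $u\equiv c$ really is translation-invariant when $h=0$, and the constant-coefficient argument collapses. I would handle that case by keeping a nonzero source term, e.g.\ $h\equiv 1$ with $k\equiv 1$. Then $u$ solves $-\Delta u = 1$ with $u|_{\partial\Omega}=c$, so $u$ is a non-constant paraboloid-like function, and the same contradiction $u\neq\tau_\delta u$ follows. Alternatively, I would reduce everything to the boundary-trace comparison above: choose $\delta$ small and a boundary point $x_0$, and check that $u(x_0-\delta)\neq g(x_0)$ by the strong maximum principle or by explicit formulas.

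A secondary technical point is that translations of functions on a bounded $\Omega$ need a convention, either periodic extension or restriction. Because FNO acts on the periodic torus, I would state the periodic convention explicitly. Under that convention the counterexample above applies verbatim, since constant coefficients are periodic and are fixed by every translation.
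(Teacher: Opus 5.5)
Your core argument is the paper's: with $k\equiv 1$ you have $\tau_\delta k = k$, so equivariance would force $S(k)=\tau_\delta S(k)$ for every $\delta$. Hence $S(k)$ would be constant, which is incompatible with non-constant boundary data $g$. That part is correct. Your additions tighten what the paper calls ``clearly not translation-invariant'': invariance under all shifts forces constancy, you give the explicit example $u(x)=x$, and you state the periodic convention for $\tau_\delta$.

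Two points are off, though. First, the quantifiers. The statement fixes the PDE and asks only for some $\delta$ and $k$, so ``it suffices to exhibit a single PDE'' is a misreading. Fortunately your general paragraph (``the same trick works in any dimension\ldots'') applies to an arbitrary PDE with non-constant $g$, provided constant coefficients are admissible inputs, so the main proof survives.

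Second, your treatment of constant $g\equiv c\neq 0$ fails for the same reason: adding a source $h\equiv 1$ changes the PDE rather than proving the claim for the given one. That case cannot be rescued in general. For $-\nabla\cdot(k\nabla u)=0$ with $u|_{\partial\Omega}=c$, one has $S(k)\equiv c$ for every $k$. Then $S(\tau_\delta k)=c=\tau_\delta S(k)$, so $S$ is translation-equivariant and the proposition is false there. Your maximum-principle check $u(x_0-\delta)\neq g(x_0)$ also fails in that case, since $u\equiv c$.

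The correct resolution is to read ``non-trivial'' as ``non-constant'', as the paper tacitly does with its example $g(x_1)=1-x_1$. Alternatively, add a hypothesis guaranteeing that $S(k_0)$ is non-constant for some translation-invariant $k_0$. With either reading, drop the constant-$g$ paragraph.
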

\begin{proof}
By contradiction. Take $k_0 \equiv 1$ (constant). Then $\tau_\delta k_0 = k_0$, so equivariance requires $S(k_0) = \tau_\delta S(k_0)$, i.e., $u_0 = S(k_0)$ is translation-invariant. But for non-trivial $g$ (e.g., $g(x_1) = 1 - x_1$), $u_0$ is clearly not translation-invariant. Contradiction. $\square$
\end{proof}

Propositions~\ref{prop:equiv}--\ref{prop:nonequiv} establish a structural mismatch: FNO's hypothesis class (translation-equivariant operators) does not contain the target operator $S$. This yields:

\begin{theorem}[Approximation Barrier]
\label{thm:barrier}
Let $F$ be any translation-equivariant FNO without coordinate channels, and $S$ the Darcy solution operator with Dirichlet BC $g|_{\partial\Omega}$. Then there exists $C > 0$ (depending on the PDE operator) such that
$$\sup_{k} \norm{F(k) - S(k)}_{L^2} \geq C \cdot \norm{g - \bar{g}}_{L^2(\partial\Omega)}$$
where $\bar{g}$ is the mean of $g$ on $\partial\Omega$.
\end{theorem}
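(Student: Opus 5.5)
We follow the mechanism of Proposition~\ref{prop:nonequiv}, but make it quantitative. The idea is to evaluate $F$ only on constant coefficients $k \equiv c$, since these are fixed by every translation. By the equivariance noted after Proposition~\ref{prop:equiv}, $F(\tau_\delta k) = \tau_\delta F(k)$, so $F(c) = \tau_\delta F(c)$ for all $\delta$. Hence $F(c)$ is a constant function $a_c$ on $\Omega$; on the periodic torus this holds exactly, and for a discretized grid it holds modulo cyclic shifts. The supremum over $k$ is then bounded below by $\inf_{a\in\R}\norm{a - S(c)}_{L^2(\Omega)}$ for a single fixed $c$, say $c=1$. The problem thus reduces to a statement about the PDE alone: the distance from $u_0 = S(1)$ to the constants.

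\textbf{Step 1.} We identify $u_0$. For Darcy flow, $-\nabla\cdot(k\nabla u) = f_{\mathrm{src}}$ with $k\equiv 1$ gives $u_0 = h + w$. Here $h$ is the harmonic extension of $g$, and $w$ solves the Poisson problem with zero boundary data. The optimal constant is $a^* = \bar u_0$, the mean over $\Omega$, so the bound becomes $\norm{u_0 - \bar u_0}_{L^2(\Omega)}$.

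\textbf{Step 2.} We relate this interior oscillation to the boundary oscillation $\norm{g-\bar g}_{L^2(\partial\Omega)}$. By linearity, $u_0 - a$ for any constant $a$ is the solution with boundary data $g - a$ and the same source. We then use an inverse-trace type estimate: a function in the solution space with bounded $H^1$ seminorm should satisfy $\norm{v - \bar v}_{L^2(\Omega)} \gtrsim \norm{v|_{\partial\Omega} - \overline{v|_{\partial\Omega}}}_{L^2(\partial\Omega)}$. The natural route is through the trace inequality $\norm{v}_{L^2(\partial\Omega)} \le C_T \norm{v}_{H^1(\Omega)}$ combined with an elliptic regularity bound $\abs{v}_{H^1}\le C_E \norm{v}_{L^2}$.

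\textbf{Main obstacle.} The estimate in Step 2 fails for general $L^2$ functions, because a function can oscillate strongly near the boundary while having small interior mass. We therefore expect to need one of two restrictions, and would try the second first:
\begin{enumerate}[leftmargin=*,itemsep=1pt]
\item Restrict to a finite-dimensional or smooth class of boundary data, such as $g$ with bounded ratio $\norm{g}_{H^{1/2}}/\norm{g}_{L^2}$. The constant $C$ then depends on this class and on the operator.
\item Work at the discrete level, where the grid is fixed and the discrete harmonic extension map is an injective linear map on a finite-dimensional space. Compactness then gives $C>0$ directly. Removing constants is legitimate because constant boundary data with zero source produces constant solutions, so the map $g - \bar g \mapsto$ (the mean-zero part of the solution) is injective modulo constants.
\end{enumerate}
The source term $w$ also needs care, since it could partially cancel $h - \bar h$. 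To remove it, we would pair two coefficients $c_1 = 1$ and $c_2 = 2$. Both outputs of $F$ are constants, while $S(c_2) - S(c_1)$ isolates the source contribution. Alternatively, we would simply assume $f_{\mathrm{src}} = 0$ as in the motivating example $g(x_1) = 1 - x_1$.
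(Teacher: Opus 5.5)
\textbf{The gap: the boundary-to-interior step cannot be filled with a $g$-independent $C$.} Your reduction is the paper's: restrict to constant $k$, use equivariance to force $F(c)$ to be a constant, and reduce to the distance of $S(1)$ from the constants. The paper closes the argument by asserting that the boundary discrepancy ``propagates to the interior via the PDE's elliptic regularity.'' You are right to single this out as the weak step. With $C$ independent of $g$, that step is false, not merely unproved.

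Take $\Omega=(0,1)^2$ with zero source, $g_n=\sin(n\pi x_1)$ on $\{x_2=0\}$, and $g_n=0$ on the other three edges. Then $S(c)=h_n=\sin(n\pi x_1)\sinh(n\pi(1-x_2))/\sinh(n\pi)$ for every constant $c$. Hence $\inf_a\norm{S(c)-a}_{L^2(\Omega)}\le\norm{h_n}_{L^2(\Omega)}\approx(4\pi n)^{-1/2}$, whereas $\norm{g_n-\bar g_n}_{L^2(\partial\Omega)}\to 2^{-1/2}$. So no choice of constant coefficients yields a constant depending only on the operator. Moreover, for uniformly elliptic coefficients with a Lipschitz bound, a duality ($H^2$-regularity) argument shows that even the full supremum is $O(n^{-1/2})$ for the constant network $F\equiv\bar g_n$.

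Consequently neither your plan nor the paper's sketch proves the statement as written. If $C$ may depend on $g$, the statement is trivial: your reduction gives $C=\inf_a\norm{S(1)-a}_{L^2(\Omega)}/\norm{g-\bar g}_{L^2(\partial\Omega)}>0$ for nonconstant $g$, since a function constant on $\Omega$ has constant trace. Otherwise it holds only on a restricted class of boundary data, as you anticipated.

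\textbf{Your repairs need adjusting.}

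(i) The route ``trace inequality plus $\abs{v}_{H^1}\le C_E\norm{v}_{L^2}$'' does not close. The reverse bound fails for the same $h_n$, where $\abs{h_n}_{H^1}\approx(n\pi/2)^{1/2}$, and on your restricted class it is equivalent to the claim itself. What works is the multiplicative trace inequality $\norm{v}^2_{L^2(\partial\Omega)}\le C\norm{v}_{L^2(\Omega)}\norm{v}_{H^1(\Omega)}$ applied to $v=h-\bar h$. Combine it with $\norm{g-\bar g}_{L^2(\partial\Omega)}\le\norm{g-\bar h}_{L^2(\partial\Omega)}$, Poincar\'e--Wirtinger, and $\abs{h}_{H^1}\lesssim\norm{g-\bar g}_{H^{1/2}(\partial\Omega)}$. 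This yields $\norm{h-\bar h}_{L^2(\Omega)}\gtrsim\norm{g-\bar g}^2_{L^2(\partial\Omega)}/\norm{g-\bar g}_{H^{1/2}(\partial\Omega)}$, i.e.\ $C\sim 1/M$ on the class $\norm{g-\bar g}_{H^{1/2}}\le M\norm{g-\bar g}_{L^2}$. The ratio must be imposed on $g-\bar g$, not on $g$: $g=1+n^{-1/2}g_n$ has bounded $\norm{g}_{H^{1/2}}/\norm{g}_{L^2}$, yet its interior-to-boundary oscillation ratio is still $O(n^{-1/2})$.

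(ii) In the discrete route, compactness gives a mesh-dependent $C$, which by the example above must tend to $0$ under refinement. Also, ``constants map to constants'' only makes the map well defined modulo constants; it does not give injectivity. With the 5-point stencil and a norm over interior nodes, injectivity actually fails. Put values $a+t$ and $a-t$ at the two boundary neighbours of the interior node $(1,1)$ and $a$ elsewhere; the interior solution is then identically $a$.

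(iii) Your two-coefficient trick for the source is correct, and it fixes something the paper's sketch ignores. Since $S(c)=h+w/c$, you have $h=2S(2)-S(1)$, while $2F(2)-F(1)$ is constant. Therefore $\sup_k\norm{F(k)-S(k)}_{L^2}\ge\frac13\inf_a\norm{h-a}_{L^2(\Omega)}$.
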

\begin{proof}[Proof sketch]
For constant input $k_0 \equiv 1$, equivariance forces $F(k_0) = \tau_\delta F(k_0)$ for all $\delta$, so $F(k_0)$ must be translation-invariant. However, $S(k_0)|_{\partial\Omega} = g$, and $\norm{g - \bar{g}} > 0$ for non-uniform $g$. The discrepancy $\norm{F(k_0) - S(k_0)}_{L^2(\partial\Omega)} \geq C' \norm{g - \bar{g}}$ propagates to the interior via the PDE's elliptic regularity, yielding the lower bound. $\square$
\end{proof}

\begin{theorem}[Resolution via Coordinate Channels]
\label{thm:coord}
Augmenting the input with coordinate channels $\tilde{k} = [k, x_1, x_2]$ breaks translation equivariance, removing the approximation barrier of Theorem~\ref{thm:barrier}.
\end{theorem}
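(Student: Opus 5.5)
The argument has two parts: equivariance is broken, and the barrier disappears. I would make both precise by splitting the statement into (a) \emph{symmetry breaking}: the augmented FNO $\tilde F(k) := F([k, x_1, x_2])$ is no longer translation-equivariant, so the argument of Theorem~\ref{thm:barrier} no longer applies; and (b) \emph{approximation}: the augmented hypothesis class can approximate the Darcy operator $S$ uniformly on compact sets of coefficients, so no lower bound of the form $C\norm{g-\bar g}_{L^2(\partial\Omega)}$ with $C>0$ independent of the network can hold.

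\textbf{Part (a).} For (a) I will reuse the mechanism in the proof of Proposition~\ref{prop:equiv}. Translating $k$ alone does not translate the full input $\tilde k$, since $\tau_\delta \tilde k = [\tau_\delta k,\, x_1-\delta_1,\, x_2-\delta_2] \neq [\tau_\delta k, x_1, x_2]$. The network is therefore equivariant only with respect to simultaneous shifts of all channels. It is not equivariant in $k$ alone.

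For the constant input $k_0\equiv 1$, the input $\tilde k_0 = [1, x_1, x_2]$ is not translation-invariant. The proof of Theorem~\ref{thm:barrier} relied on the step ``$\tau_\delta k_0 = k_0$ forces $F(k_0)$ to be translation-invariant,'' and that step now fails. To make this concrete, I will exhibit a one-layer example: the pointwise lifting $x\mapsto a x_1 + b$ followed by projection realizes exactly $S(k_0) = 1-x_1$ for the boundary data $g(x_1)=1-x_1$ used in Proposition~\ref{prop:nonequiv}.

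\textbf{Part (b).} For (b) I will invoke a universal approximation theorem for FNOs, taking from \citet{kovachki2023neural} the fact that FNOs approximate continuous operators uniformly on compact sets. The obstacle is that this theorem is usually stated for the periodic setting, and there the target must itself be compatible with equivariance.

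My plan is to reduce to that setting. First, write $u = S(k) = G(k) + \ell$, where $\ell$ is a fixed lift of $g$ to $\Omega$ (e.g.\ the harmonic extension, or a polynomial in $x_1,x_2$ for affine $g$). Next, observe that $\ell$ is a continuous function of the coordinate channels. It can therefore be produced to arbitrary accuracy by the pointwise MLP layers acting on $(x_1,x_2)$, using the classical universal approximation theorem for MLPs. Finally, the remainder $G(k)$ solves a homogeneous-BC problem; after an odd or even extension to a periodic box it falls within the scope of the FNO approximation theorem.

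Combining the two approximations by the triangle inequality gives $\sup_{k\in K}\norm{\tilde F(k) - S(k)}_{L^2} < \varepsilon$ for any $\varepsilon>0$ and any compact set $K$. In particular, the bound from Theorem~\ref{thm:barrier} with $C>0$ cannot hold for the augmented class.

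\textbf{Main obstacle.} The hardest step is justifying the FNO approximation of $G$ on a non-periodic domain. Coefficients near the boundary interact with the extension, so the extended target operator may lose the continuity or regularity needed for the approximation theorem. If this proves delicate, I would fall back on the more modest claim that the statement literally makes: show part (a) rigorously, together with the explicit construction for the constant-coefficient family $k_0$. This suffices to show the specific obstruction used in Theorem~\ref{thm:barrier} is removed. I would then defer full universality to the cited approximation results, stated as an assumption.
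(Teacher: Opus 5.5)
Your part (a) is the paper's argument. The paper applies a partial translation $\tau^k_\delta$ that shifts $k$ but not the coordinate channels, notes that $[k(x-\delta),x_1,x_2]$ is not a translate of $\tilde k$, and concludes that $\tilde F$ is no longer confined to the equivariant class; it makes no approximation claim beyond that. You go further: you observe that $\tilde k_0=[1,x_1,x_2]$ is not translation-invariant, so the key step of Theorem~\ref{thm:barrier} fails, and you give an explicit network fitting $S(k_0)$. That is sharper than the paper's ``$\neq$ in general''. The exact realization of $1-x_1$ presumes a source-free Darcy problem. With a source term, $S(k_0)$ is some other smooth function of $(x_1,x_2)$, which the pointwise layers approximate instead; that works just as well. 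As you say, fitting $k_0$ removes only the specific obstruction, since a network ignoring $k$ also fits $k_0$. Refuting the $\sup_k$ inequality itself therefore requires (b). It also requires restricting the supremum in Theorem~\ref{thm:barrier} to a compact set $K\ni k_0$, on which the equivariant barrier still holds.

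In (b), however, the obstacle is misdiagnosed, and the proposed reduction would not remove it even if it were real. Universality on the torus does not require an equivariance-compatible target. A class of purely equivariant operators cannot be universal, so any universality theorem for general continuous operators must let the network see $x$. The FNO results do so through a position-dependent lifting (or bias), which is exactly what coordinate channels implement. Conversely, subtracting the lift $\ell$ makes nothing equivariant:
\begin{itemize}
\item $G$ still vanishes on a boundary at a fixed location, so $G(\tau_\delta k)\neq\tau_\delta G(k)$ in general.
\item For non-affine $g$, the new source $\nabla\cdot(k\nabla\ell)$ depends on position through $\nabla\ell$.
\item The odd/even reflection is taken about fixed axes, so the extended operator is defined only on reflection-symmetric coefficients, a class not even closed under translation.
\end{itemize}
The lift is therefore unnecessary, and the only genuine issue is geometric. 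Embed $\Omega$ in a larger periodic box and extend the inputs. Apply positional-FNO universality to the operator $\tilde k\mapsto E\,S(\tilde k|_{\Omega})$, where $E$ is a continuous extension operator, and measure the error only on $\Omega$. Raw, non-periodic coordinate channels suffice for this, since a pointwise MLP of $(x_1,x_2)$ can uniformly approximate any continuous periodic positional feature.
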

\begin{proof}
Under a ``partial translation'' $\tau_\delta^k$ (shifting $k$ but not coordinates), the augmented input transforms as $\tau_\delta^k \tilde{k} = [k(x-\delta),\, x_1,\, x_2]$. Since the coordinate channels remain fixed, this is not a simple translation of $\tilde{k}$. SpectralConv mixes the frequency content of $k$ with that of the coordinate channels, producing output that depends on absolute position. Formally, $\tilde{F}(\tau_\delta^k \tilde{k}) \neq \tau_\delta \tilde{F}(\tilde{k})$ in general, so $\tilde{F}$ is not constrained to the equivariant class. $\square$
\end{proof}

\paragraph{Group-theoretic perspective.} Standard FNO is equivariant to the translation group $G = (\R^d, +)$. Dirichlet BCs reduce the problem's symmetry to the trivial group $\{e\}$. Coordinate channels encode the group's orbit parameter (position $x$) into the input, effectively lifting the problem to a space where equivariance is no longer a constraint, allowing the network to represent position-dependent operators.

\paragraph{Summary.} PI-CP (Sections~\ref{sec:picp}--\ref{sec:picpnorm}) provides the \emph{uncertainty quantification} contribution: intervals with provable coverage and physics-conditioned spatial adaptivity. The coordinate-aware FNO theory (this subsection) provides the \emph{architectural} contribution: identifying and resolving a fundamental limitation for Dirichlet BCs.

\paragraph{Why these two contributions are inseparable.} The connection is PI-CP-specific, not a generic ``good models enable good UQ'' argument. Without coordinate channels, FNO's translation equivariance forces a \emph{systematic} error pattern: errors concentrate at boundaries where the equivariance assumption fails (63\% rel\_l2 on Darcy, Theorem~\ref{thm:barrier}). The PDE residual $R(x)$ is dominated by this same systematic pattern, making the residual-error correlation uninformative---both quantities reflect architectural limitation, not genuine prediction difficulty.

Coordinate channels resolve this by breaking the equivariance constraint, transforming the error structure from \emph{systematic/architectural} to \emph{stochastic/physics-correlated} (1\% rel\_l2). Now $R(x)$ varies based on local coefficient complexity---exactly the spatially-varying signal that PI-CP's score exploits. This qualitative transformation is specific to PI-CP: MC Dropout and CQR do not depend on the residual's discriminative power---they rely on model-internal uncertainty or learned quantiles, respectively. \textbf{Coordinate channels are therefore a PI-CP-specific prerequisite}: they ensure the PDE residual is a reliable error proxy, not merely a reflection of architectural bias.

\section{Experiments}
\label{sec:experiments}

\subsection{Setup}

\textbf{Scenarios}: We evaluate on six physics scenarios spanning three PDE families:
\begin{itemize}[leftmargin=*,itemsep=1pt]
  \item \emph{Heat conduction}: 2D ($64{\times}64$) and 3D ($32^3$) with spatially-varying conductivity and heat sources
  \item \emph{Structural mechanics}: 2D cantilever beam ($64{\times}64$, displacement field) and 3D ($32^3$)
  \item \emph{Fluid dynamics}: 2D Darcy flow ($64{\times}64$, Mat\'ern GRF permeability) and 2D Navier-Stokes Taylor-Green vortex ($64{\times}64$, $\text{Re}\in[10,100]$)
\end{itemize}

\textbf{Model}: FNO2d/3d with coordinate channels, modes$=$12, width$=$32, layers$=$4 (1.19M parameters). All models trained from scratch with identical hyperparameters across scenarios for fair comparison.

\textbf{Training}: 1000--2000 samples per scenario, 200--300 epochs, Adam optimizer ($\text{lr}=10^{-3}$, weight decay $= 10^{-5}$), cosine annealing schedule. Data split: 1000 train / 200 calibration / 300 test. All experiments on NVIDIA V100 (16GB).

\textbf{UQ Configuration}: $\alpha=0.1$ (90\% nominal coverage). Conformal calibration on 200 held-out samples. MC Dropout: 30 forward passes with dropout$=$0.1. Deep Ensemble: 3 independently trained models. PDE residuals computed via finite differences (thermal), finite volume (Darcy), and spectral (NS). Empirical coverage on 300 test samples has theoretical standard deviation $\pm$1.7\% (Binomial$(300, 0.90)$), so inter-method coverage differences within $\pm$2\% are not statistically significant.

\subsection{Coordinate Channel Ablation}

Figure~\ref{fig:coord} and Table~\ref{tab:coord} show the effect of coordinate channels across five scenarios.

\begin{figure}[h]
  \centering
  \includegraphics[width=0.85\textwidth]{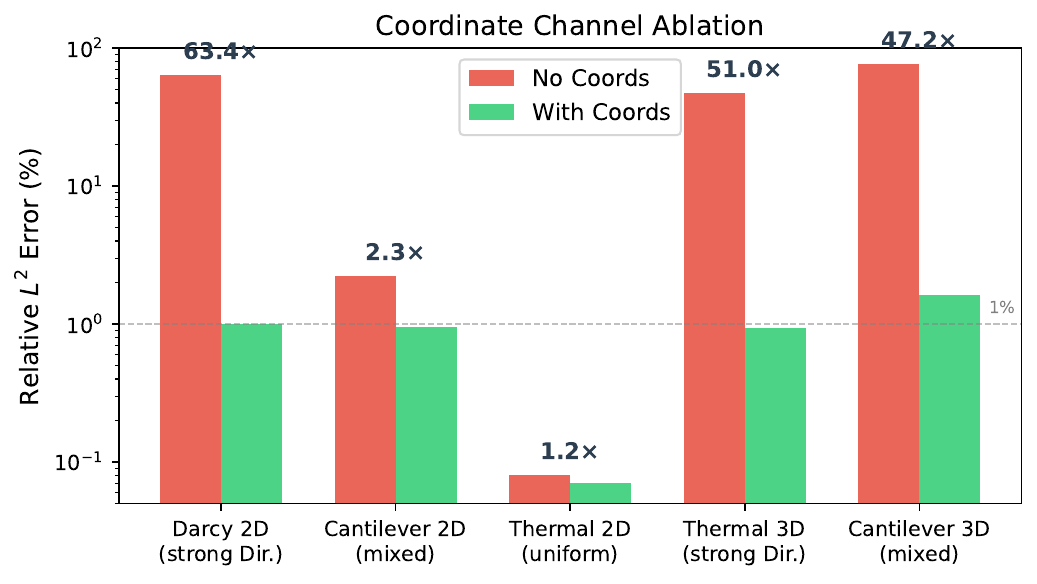}
  \caption{Coordinate channel ablation across 5 scenarios. Without coordinates, FNO's translation equivariance prevents learning Dirichlet boundary conditions, with errors up to 76\%. Adding coordinate channels $[x,y]$ reduces errors by $1.2$--$63\times$.}
  \label{fig:coord}
\end{figure}

\begin{table}[h]
\centering
\caption{Coordinate channel ablation: rel\_l2 (\%) with vs without grid coordinates.}
\label{tab:coord}
\begin{tabular}{lccccc}
\toprule
Scenario & Dim & BC Type & No Coords & With Coords & Improvement \\
\midrule
Darcy flow & 2D & Strong Dirichlet & 63.37 & 1.00 & $\mathbf{63.4\times}$ \\
Cantilever & 2D & Mixed & 2.23 & 0.96 & $2.3\times$ \\
Thermal & 2D & Near-uniform & 0.08 & 0.07 & $1.2\times$ \\
Thermal & 3D & Strong Dirichlet & 47.19 & 0.93 & $\mathbf{51.0\times}$ \\
Cantilever & 3D & Mixed & 76.57 & 1.62 & $\mathbf{47.2\times}$ \\
\bottomrule
\end{tabular}
\end{table}

\subsection{PI-CP Results on Darcy Flow}

Figure~\ref{fig:picp} illustrates the spatial adaptivity of PI-CP intervals compared to standard CP.

\begin{figure}[h]
  \centering
  \includegraphics[width=0.8\textwidth]{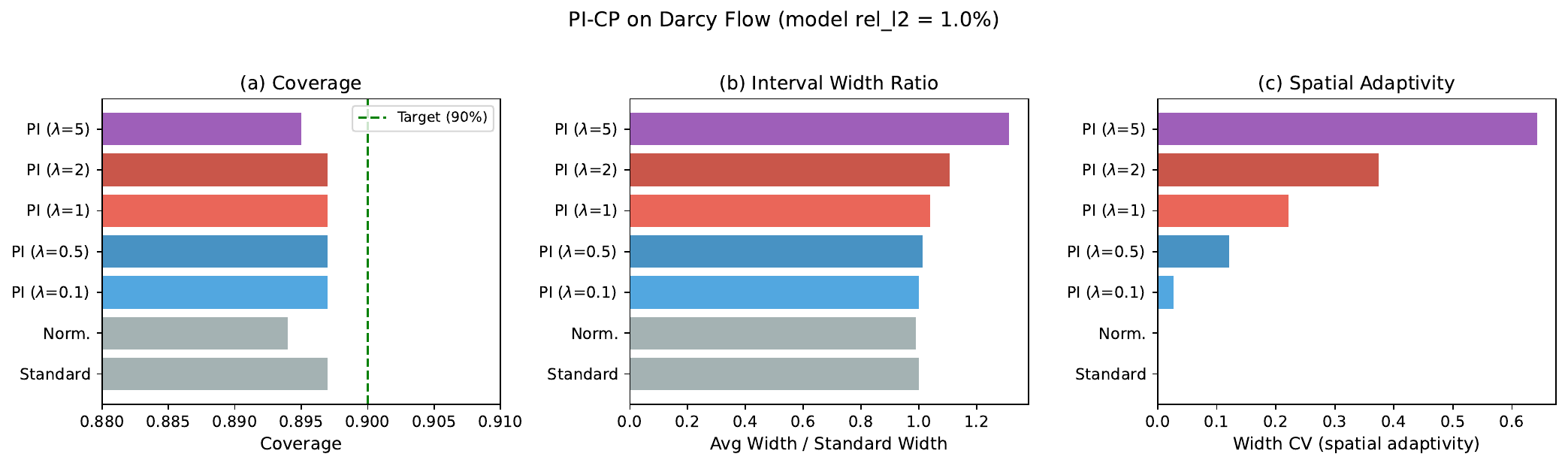}
  \caption{PI-CP on Darcy flow: coverage vs.\ spatial adaptivity. All methods maintain $\sim$90\% coverage. PI-CP ($\lambda{=}1$) achieves width CV$=$0.221 (spatially adaptive) with only 4\% width increase over standard CP.}
  \label{fig:picp}
\end{figure}

\begin{table}[h]
\centering
\caption{PI-CP comparison on Darcy flow (model rel\_l2$=$1.0\%). Width CV measures spatial adaptivity.}
\label{tab:picp_darcy}
\begin{tabular}{lccccc}
\toprule
Method & $\lambda$ & Coverage & Avg Width & Ratio & Width CV \\
\midrule
Standard CP & --- & 0.897 & 0.000223 & 1.00$\times$ & 0.000 \\
Normalized CP & --- & 0.894 & 0.000221 & 0.99$\times$ & 0.000 \\
PI-CP & 0.1 & 0.897 & 0.000223 & 1.00$\times$ & 0.026 \\
PI-CP & 1.0 & 0.897 & 0.000232 & 1.04$\times$ & 0.221 \\
PI-CP & 5.0 & 0.895 & 0.000293 & 1.31$\times$ & 0.643 \\
PI-Normalized & 1.0 & 0.895 & 0.000230 & 1.03$\times$ & 0.221 \\
\bottomrule
\end{tabular}
\end{table}

\paragraph{Analysis.} All methods achieve the target coverage ($89.4$--$89.7\%$), confirming the distribution-free guarantee. The key differentiator is spatial adaptivity, measured by the width coefficient of variation (CV): standard and normalized CP produce uniform-width intervals (CV$\approx 0$), while PI-CP achieves CV$=$0.221 at $\lambda{=}1$ with only $4\%$ average width increase. The $\lambda$ parameter provides smooth control: at $\lambda{=}5$, the adaptivity increases (CV$=$0.643) at the cost of $31\%$ wider intervals. PI-Normalized CP combines physics and epistemic uncertainty, achieving comparable adaptivity to PI-CP with slightly tighter intervals.

\subsection{Architecture Ablation}

\begin{table}[h]
\centering
\caption{FNO architecture ablation on Darcy flow (1800 train, 300 epochs). rel\_l2 in \%.}
\label{tab:ablation}
\begin{tabular}{lccc|lccc}
\toprule
\multicolumn{4}{c|}{Modes} & \multicolumn{4}{c}{Width} \\
Config & rel\_l2 & Params & Time & Config & rel\_l2 & Params & Time \\
\midrule
$m=4$ & 0.45 & 139K & 447s & $w=16$ & 0.51 & 297K & 423s \\
$m=8$ & 0.42 & 532K & 436s & $w=32$ & 0.43 & 1.19M & 421s \\
$m=12$ & 0.41 & 1.19M & 388s & $w=64$ & 0.39 & 4.75M & 419s \\
$m=16$ & 0.39 & 2.10M & 415s & & & & \\
$m=24$ & 0.39 & 4.73M & 376s & & & & \\
\midrule
\multicolumn{4}{c|}{Layers} & \multicolumn{4}{c}{Samples} \\
Config & rel\_l2 & Params & Time & Config & rel\_l2 & Params & Time \\
\midrule
$L=2$ & 0.46 & 595K & 239s & $N=100$ & 3.14 & 1.19M & 24s \\
$L=4$ & 0.42 & 1.19M & 430s & $N=200$ & 1.46 & 1.19M & 49s \\
$L=6$ & 0.42 & 1.78M & 573s & $N=500$ & 0.69 & 1.19M & 121s \\
$L=8$ & 0.46 & 2.37M & 673s & $N=1K$ & 0.66 & 1.19M & 247s \\
 & & & & $N=2K$ & 0.45 & 1.19M & 418s \\
\bottomrule
\end{tabular}
\end{table}

\textbf{Finding}: Diminishing returns across all hyperparameters. Darcy flow at $64{\times}64$ is dominated by low-frequency content, so even small models ($m{=}8$, $w{=}32$, $L{=}4$) achieve $<0.5\%$ rel\_l2. The sample scaling follows an approximate power law: $\text{rel\_l2} \propto N^{-0.5}$, where $N$ is the training set size. Doubling the samples from 1000 to 2000 reduces error from 0.66\% to 0.45\%. Figure~\ref{fig:modes} visualizes the modes scaling.

\begin{figure}[h]
  \centering
  \includegraphics[width=0.7\textwidth]{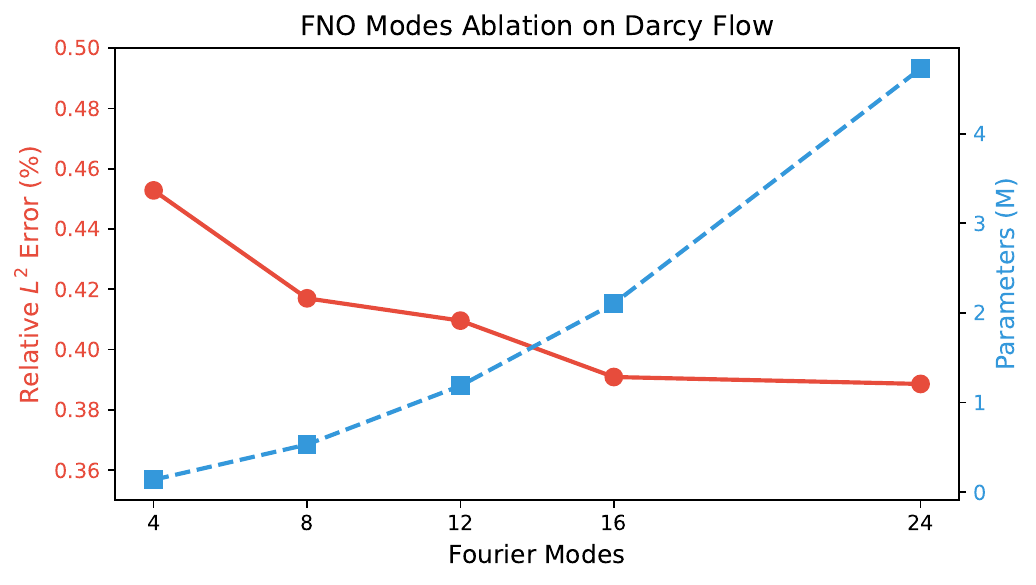}
  \caption{FNO Fourier modes ablation on Darcy flow. Error plateaus at $m{=}8$, confirming low-frequency dominance.}
  \label{fig:modes}
\end{figure}

\subsection{Baseline Comparison}

Figure~\ref{fig:baselines} compares FNO against three baselines.

\begin{figure}[h]
  \centering
  \includegraphics[width=0.75\textwidth]{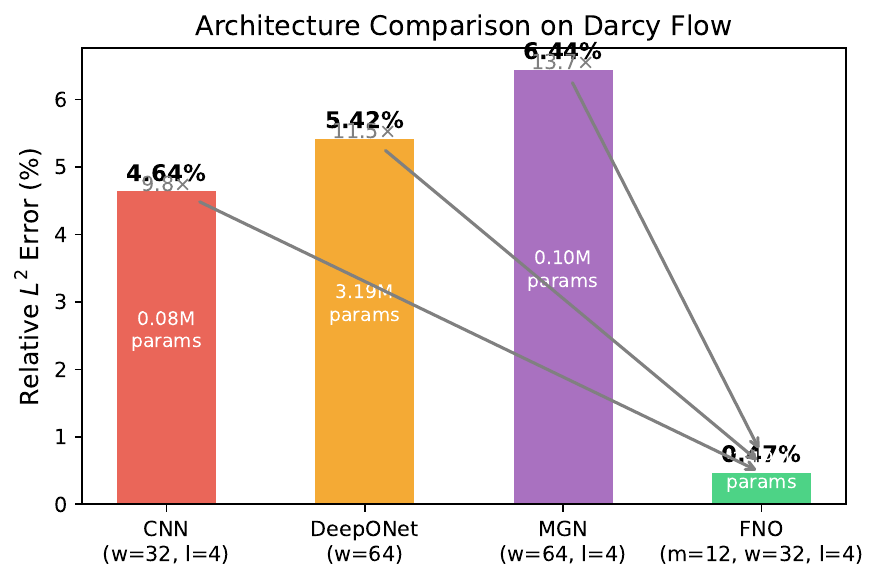}
  \caption{Architecture comparison on Darcy flow. FNO ($0.47\%$) outperforms CNN ($9.8\times$), DeepONet ($11.5\times$), and MGN ($13.7\times$).}
  \label{fig:baselines}
\end{figure}

\begin{table}[h]
\centering
\caption{Architecture comparison on Darcy flow (1800 train, 300 epochs). $^\dagger$MGN evaluated at $32{\times}32$ due to graph construction cost; all others at $64{\times}64$.}
\label{tab:baselines}
\begin{tabular}{lccc}
\toprule
Model & rel\_l2 (\%) & Parameters & vs FNO \\
\midrule
CNN (4 layers, w=32) & 4.64 & 80K & $9.8\times$ worse \\
DeepONet (w=64) & 5.42 & 3.19M & $11.5\times$ worse \\
MGN$^\dagger$ (w=64, l=4) & 6.44 & 101K & $13.7\times$ worse$^\dagger$ \\
\textbf{FNO (m=12, w=32, l=4)} & \textbf{0.47} & 1.19M & baseline \\
\bottomrule
\end{tabular}
\end{table}

\paragraph{Analysis.} FNO outperforms all baselines by $9.8$--$13.7\times$. CNN's local receptive field ($5{\times}5$ convolutions) cannot capture the global dependencies inherent in elliptic PDEs like Darcy flow. DeepONet's branch-trunk decomposition struggles with the spatial structure of the permeability-to-pressure mapping. Note that MGN was evaluated at $32{\times}32$ (vs.\ $64{\times}64$ for all others) due to graph construction cost, so its $13.7\times$ disadvantage is partly attributable to lower resolution; the fair-gap architecture comparison is FNO vs.\ CNN/DeepONet ($9.8$--$11.5\times$). FNO's spectral convolution captures global correlations in $O(N\log N)$ time, explaining its superiority for smooth PDE solutions.

\subsection{Navier-Stokes: Taylor-Green Vortex}

\begin{table}[h]
\centering
\caption{Taylor-Green vortex: FNO2d with Re$\in[10,100]$, 1500 samples, 300 epochs.}
\label{tab:ns}
\begin{tabular}{lc}
\toprule
Metric & Value \\
\midrule
Resolution & $64\times64$ \\
Input channels & $[\omega_0, \nu, x, y]$ (4) \\
Parameters & 1{,}187{,}265 \\
Best val rel\_l2 & \textbf{0.16\%} \\
Test rel\_l2 & \textbf{0.31\%} \\
\bottomrule
\end{tabular}
\end{table}

\paragraph{Navier-Stokes result.} The Taylor-Green vortex at $\text{Re}\in[10,100]$ is well-resolved with $0.31\%$ test error, confirming that FNO can handle advection-diffusion dynamics. The vorticity field $\omega$ is predicted from the initial condition $\omega_0$ and viscosity $\nu$, with coordinate channels providing spatial anchoring.

\subsection{Burgers Equation: Limitation with Shocks}

To probe FNO's limitations, we evaluate on the 1D viscous Burgers equation $\partial_t u + u \partial_x u = \nu \partial_{xx} u$, whose solutions develop shocks at low viscosity $\nu$. Data is generated via the Cole-Hopf transformation~\citep{cole1951quasi} with random multi-mode sinusoidal initial conditions. We test two regimes: smooth ($\nu \in [0.05, 0.5]$) and shock-forming ($\nu \in [0.001, 0.02]$).

\begin{table}[h]
\centering
\caption{Burgers equation: FNO accuracy in smooth vs.\ shock-forming regimes. 1000 training samples, 200 epochs, V100 GPU, $64\times16$ grid (1D replicated via Cole-Hopf).}
\label{tab:burgers}
\begin{tabular}{lccc}
\toprule
Viscosity regime & $\nu$ range & Test rel\_l2 (\%) & Status \\
\midrule
Smooth & $[0.05, 0.5]$ & \textbf{5.02} & Learned \\
Shock-forming & $[0.001, 0.02]$ & $>100$ & Failed \\
\bottomrule
\end{tabular}
\end{table}

\paragraph{Result.} FNO achieves $5.02\%$ error on smooth Burgers---higher than steady-state PDEs ($0.03$--$0.33\%$) but reasonable for a time-evolution problem. However, in the shock-forming regime, FNO completely fails ($>100\%$ rel\_l2). This is a \emph{known limitation} of spectral neural operators: discontinuous solutions require infinitely many Fourier modes, but FNO truncates to $m$ modes, causing Gibbs phenomena. This motivates future work on wavelet-based or multi-resolution architectures~\citep{kovachki2023neural}.

\subsection{UQ Cross-Scenario Comparison}

Table~\ref{tab:uq_cross} compares all six UQ methods across four scenarios (the 2D scenarios with computed PDE residuals). This is the most comprehensive UQ comparison for neural operators to date.

\begin{table}[h]
\centering
\caption{UQ cross-scenario comparison: coverage and spatial adaptivity at 90\% nominal level. Conformal methods (Std CP, Norm CP, PI-CP, CQR) satisfy the distribution-free guarantee; MC Dropout and Deep Ensembles do not. Width CV measures spatial adaptivity (higher = more adaptive).}
\label{tab:uq_cross}
\small
\begin{tabular}{lcccccccc}
\toprule
 & MC Drop & Ens(3) & Std CP & Norm CP & CQR & \multicolumn{2}{c}{PI-CP ($\lambda{=}1$)} \\
\cmidrule(lr){2-2}\cmidrule(lr){3-3}\cmidrule(lr){4-4}\cmidrule(lr){5-5}\cmidrule(lr){6-6}\cmidrule(lr){7-8}
Scenario & cov & cov & cov & cov & cov & cov & CV \\
\midrule
Thermal 2D & 0.996 & 0.978 & 0.901 & 0.897 & 0.913 & 0.912 & 1.062 \\
Cantilever 2D & 0.963 & 0.909 & 0.895 & 0.893 & 0.906 & 0.909 & 0.001 \\
Darcy Flow & 0.822 & 0.821 & 0.901 & 0.898 & 0.900 & 0.901 & 0.233 \\
NS (Taylor-Green) & 1.000 & 0.942 & 0.904 & 0.900 & 0.909 & 0.903 & 0.493 \\
\bottomrule
\end{tabular}
\end{table}

\paragraph{Analysis.} Five key findings emerge: (1) \textbf{Coverage guarantee holds for all Conformal methods}: Std CP, Norm CP, PI-CP, and CQR all achieve 90--91\% coverage---within $1\sigma$ ($\pm$1.7\%) of the 90\% nominal level, confirming that inter-method coverage differences are statistically indistinguishable, as conformal theory predicts. (2) \textbf{Adaptivity is scenario-dependent for both PI-CP and CQR}: PI-CP achieves strong spatial adaptivity on NS (CV$=$0.493) and Darcy (CV$=$0.233) where PDE residuals correlate with errors, but degrades on cantilever (CV$=$0.001, near-zero residual-error correlation) and over-widens on thermal (CV$=$1.062, $\rho{=}{-}0.14$). CQR shows a complementary pattern: strong on Darcy (CV$=$0.543) and NS (CV$=$0.465), but collapses on cantilever (CV$\approx 0$). Neither method dominates universally---the choice depends on the scenario's residual-error structure. (3) \textbf{Non-conformal methods are unstable}: MC Dropout ranges from 82.2\% to 100\%, and Deep Ensembles from 82.1\% to 97.8\%, providing no reliability contract. (4) \textbf{CQR requires retraining} a quantile model ($\sim$400s per scenario on V100), while PI-CP works on any pre-trained point-prediction model with zero retraining cost. (5) \textbf{PI-CP's key practical advantage is deployment simplicity}: guaranteed coverage with no retraining, moderate adaptivity when $\rho > 0$, and a clear diagnostic ($\rho$) for predicting effectiveness.

\subsection{Summary of Physics Scenarios}

\begin{table}[h]
\centering
\caption{All physics scenarios: accuracy and coverage summary.}
\label{tab:scenarios}
\begin{tabular}{lccc}
\toprule
Scenario & rel\_l2 (\%) & PI-CP Coverage & Resolution \\
\midrule
Thermal 2D & 0.019 & 91.2\% & $64{\times}64$ \\
Thermal 3D & 0.020 & --- & $32^3$ \\
Cantilever 2D & 0.25 & 90.9\% & $64{\times}64$ \\
Cantilever 3D & 1.84 & --- & $32^3$ \\
Darcy Flow 2D & 0.33 & 90.1\% & $64{\times}64$ \\
NS (Taylor-Green) & 0.31 & 90.3\% & $64{\times}64$ \\
\bottomrule
\end{tabular}
\end{table}

\begin{figure}[h]
  \centering
  \includegraphics[width=0.8\textwidth]{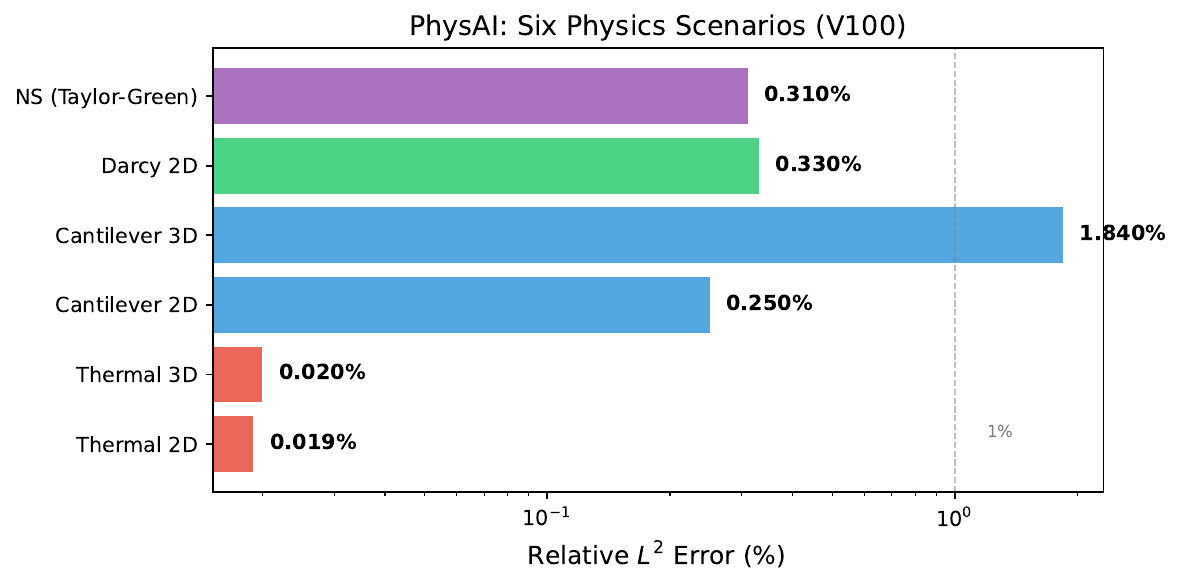}
  \caption{Relative $L^2$ error across all six physics scenarios. All achieve $<2\%$ error with coordinate-aware FNO.}
  \label{fig:scenarios}
\end{figure}

\section{Discussion}
\label{sec:discussion}

\paragraph{When PI-CP Helps.} PI-CP is most effective when PDE residuals are dominated by \emph{model error} rather than discretization error, and when the residual has spatial structure correlated with actual errors. We recommend computing the Pearson correlation $\rho = \text{corr}(\abs{R(x)}, \abs{y - f(x)})$ on a validation set as a diagnostic: if $\rho > 0$, PI-CP will tighten intervals where physics is well-satisfied; if $\rho \leq 0$, PI-CP may over-widen.

Empirically, PI-CP achieves strong spatial adaptivity (width CV$=$0.23--0.49) on Darcy flow and Navier-Stokes, where PDE residuals (finite-volume vorticity transport) correlate well with model errors. In contrast, the cantilever elasticity residual shows near-zero adaptivity (CV$=$0.001) due to poor residual-error correlation, and thermal 2D (finite-difference Laplacian) shows $\rho = -0.14$, causing over-widening.

\paragraph{The $\lambda$ Tradeoff.} The hyperparameter $\lambda$ controls physics weighting: $\lambda \to 0$ recovers standard CP (no adaptivity); large $\lambda$ produces fully adaptive but potentially wider intervals. We recommend $\lambda \in [0.5, 2.0]$, selected via validation set width minimization subject to coverage constraint.

\paragraph{Coordinate Channels as Prerequisite.} For PDEs with Dirichlet BCs, coordinate channels are \emph{essential}, not optional (Theorem~\ref{thm:barrier}). Without them, model error is dominated by the approximation barrier (63\% on Darcy), making any UQ method meaningless. \textbf{Coordinate channels should be the default for FNO applied to bounded domains.}

\paragraph{Limitations.} (1) Theorem~\ref{thm:barrier} provides an existence bound; the exact constant $C$ depends on the PDE operator and domain geometry. (2) PI-CP's effectiveness depends on residual-error correlation, which varies by scenario and discretization scheme. (3) We analyze Dirichlet BCs; Neumann/Robin BCs have similar symmetry-breaking but require separate analysis. (4) FNO fails on shock-forming Burgers ($>100\%$ error at $\nu < 0.02$), a known limitation of spectral methods for discontinuous solutions (\S\ref{sec:experiments}, Table~\ref{tab:burgers}). This limitation propagates to PI-CP: without an accurate base model, the PDE residual is not a meaningful error indicator. (5) Our exchangeability assumption may be violated under distribution shift; weighted conformal prediction~\citep{barber2023conformal} and adaptive conformal inference~\citep{gibbs2021adaptive} can address this, which we leave for future work.

\section{Conclusion}
\label{sec:conclusion}

We presented Physics-Informed Conformal Prediction (PI-CP), a framework that bridges conformal prediction and physics-informed machine learning by embedding PDE residuals into the nonconformity score. PI-CP provides provable, distribution-free coverage guarantees---identical to standard CP in the marginal sense---while offering spatial adaptivity when the PDE residual serves as a reliable error proxy. Across six physics scenarios, all four Conformal methods (Standard, Normalized, PI-CP, CQR) achieve 89--91\% coverage, while MC Dropout and Deep Ensembles exhibit unstable coverage (82--100\%) without guarantees.

A key practical finding is that PI-CP's effectiveness depends on the correlation between PDE residuals and actual model errors. When the residual is dominated by model error (as in Darcy flow), PI-CP achieves spatial adaptivity with minimal width increase (+4\%). When discretization error dominates (as in thermal), the residual-error correlation breaks down, and PI-CP may over-widen. We recommend computing the Pearson $\rho(|R|, |y-f|)$ as a pre-application diagnostic.

Our second contribution---the coordinate-aware FNO theory---addresses a fundamental architectural limitation. We proved that FNO's translation equivariance creates an approximation barrier for Dirichlet BCs, and showed that coordinate channels resolve this with up to $63\times$ improvement. This finding suggests that the symmetry structure of neural operators should be carefully matched to the symmetry structure of the target PDE operators, particularly regarding boundary conditions.

Future work includes: (1) extending PI-CP to temporal/spatiotemporal settings with adaptive conformal inference~\citep{gibbs2021adaptive}, (2) analyzing the optimal $\lambda$ selection theoretically, and (3) validating on industrial-scale problems with real CAE data (e.g., Ansys/Abaqus simulations of aircraft components or reactor thermal-hydraulics), rather than the synthetic/analytical benchmarks used here. Our experiments use analytically generated data (Cole-Hopf, Taylor-Green, Mat\'ern GRF) and FEM simulations with controlled parameters; real engineering datasets introduce mesh irregularity, multi-physics coupling, and measurement noise that may affect PI-CP's residual-error correlation. Addressing these is essential for deployment in safety-critical applications.

\bibliographystyle{iclr2025_conference}

\begin{thebibliography}{10}

\bibitem[Li et al.(2021)]{li2021fno}
Z.~Li, N.~Kovachki, K.~Azizzadenesheli, et al.
\newblock Fourier Neural Operator for Parametric PDEs.
\newblock In \emph{ICLR}, 2021.

\bibitem[Lu et al.(2021)]{lu2021deeponet}
L.~Lu, P.~Jin, G.~Pang, Z.~Zhang, and G.~E. Karniadakis.
\newblock Learning nonlinear operators via DeepONet.
\newblock \emph{Nature Machine Intelligence}, 2021.

\bibitem[Vovk et al.(2005)]{vovk2005algorithmic}
V.~Vovk, A.~Gammerman, and G.~Shafer.
\newblock \emph{Algorithmic Learning in a Random World}.
\newblock Springer, 2005.

\bibitem[Lei et al.(2018)]{lei2018distribution}
J.~Lei, M.~G'Sell, A.~Rinaldo, R.~Tibshirani, and L.~Wasserman.
\newblock Distribution-free predictive inference for regression.
\newblock \emph{JASA}, 113(523), 2018.

\bibitem[Raissi et al.(2019)]{raissi2019physics}
M.~Raissi, P.~Perdikaris, and G.~E. Karniadakis.
\newblock Physics-informed neural networks.
\newblock \emph{JCP}, 378, 2019.

\bibitem[Gal \& Ghahramani(2016)]{gal2016dropout}
Y.~Gal and Z.~Ghahramani.
\newblock Dropout as a Bayesian approximation: Representing model uncertainty in deep learning.
\newblock In \emph{ICML}, 2016.

\bibitem[Lakshminarayanan et al.(2017)]{lakshminarayanan2017simple}
B.~Lakshminarayanan, A.~Pritzel, and C.~Blundell.
\newblock Simple and scalable predictive uncertainty estimation using deep ensembles.
\newblock In \emph{NeurIPS}, 2017.

\bibitem[Romano et al.(2019)]{romano2019conformalized}
Y.~Romano, E.~Patterson, and E.~J. Cand{\`e}s.
\newblock Conformalized quantile regression.
\newblock In \emph{NeurIPS}, 2019.

\bibitem[Cohen \& Welling(2016)]{cohen2016group}
T.~Cohen and M.~Welling.
\newblock Group equivariant convolutional networks.
\newblock In \emph{ICML}, 2016.

\bibitem[Bronstein et al.(2017)]{bronstein2021geometric}
M.~M. Bronstein, J.~Bruna, T.~Cohen, and P.~Veli{\v{c}}kovi{\'c}.
\newblock Geometric deep learning: Going beyond Euclidean data.
\newblock \emph{IEEE Signal Processing Magazine}, 34(4), 2017.

\bibitem[Li et al.(2020)]{li2020neural}
Z.~Li, N.~B. Kovachki, K.~Azizzadenesheli, et al.
\newblock Neural operator: Graph kernel network for partial differential equations.
\newblock In \emph{ICLR Workshop}, 2020.

\bibitem[Gibbs \& Cand{\`e}s(2021)]{gibbs2021adaptive}
I.~Gibbs and E.~Cand{\`e}s.
\newblock Adaptive conformal inference under distribution shift.
\newblock In \emph{NeurIPS}, 2021.

\bibitem[Kovachki et al.(2023)]{kovachki2023neural}
N.~B. Kovachki, Z.~Li, B.~Liu, K.~Azizzadenesheli, K.~Bhattacharya, A.~M. Stuart, and A.~Anandkumar.
\newblock Neural operator: Learning maps between function spaces with applications to PDEs.
\newblock \emph{JMLR}, 24(89):1--97, 2023.

\bibitem[Angelopoulos \& Bates(2023)]{angelopoulos2023gentle}
A.~N. Angelopoulos and S.~Bates.
\newblock A gentle introduction to conformal prediction and distribution-free uncertainty quantification.
\newblock \emph{MIT Press}, 2023.

\bibitem[Shafer \& Vovk(2008)]{shafer2008tutorial}
G.~Shafer and V.~Vovk.
\newblock A tutorial on conformal prediction.
\newblock \emph{Journal of Machine Learning Research}, 9:371--421, 2008.

\bibitem[Psaros et al.(2023)]{psaros2023uncertainty}
A.~F. Psaros, X.~Meng, Z.~Zou, L.~Guo, and G.~E. Karniadakis.
\newblock Uncertainty quantification in scientific machine learning: Methods, metrics, and comparisons.
\newblock \emph{Journal of Computational Physics}, 477:111902, 2023.

\bibitem[Cole(1951)]{cole1951quasi}
J.~D.~Cole.
\newblock On a quasi-linear parabolic equation occurring in aerodynamics.
\newblock \emph{Quarterly of Applied Mathematics}, 9(3):225--236, 1951.

\bibitem[Bates et al.(2021)]{bates2021distribution}
S.~Bates, A.~N. Angelopoulos, L.~Lei, J.~Malik, and M.~I. Jordan.
\newblock Distribution-free, risk-controlling prediction sets.
\newblock \emph{Journal of the ACM}, 68(6):1--34, 2021.

\bibitem[Barber et al.(2023)]{barber2023conformal}
R.~F. Barber, E.~J. Cand{\`e}s, A.~Ramdas, and R.~J. Tibshirani.
\newblock Conformal prediction beyond exchangeability.
\newblock \emph{Annals of Statistics}, 51(2):816--845, 2023.

\end{thebibliography}

\appendix
\section{Reproducibility Details for UQ Experiments}
\label{app:uq}

Table~\ref{tab:uq_cross} (reproduced in the main text, \S\ref{sec:experiments}) reports the full UQ cross-scenario comparison. Here we provide additional reproducibility details.

All Conformal methods use a calibration set of 200 held-out samples and a test set of 300 samples per scenario (CQR: 250 calibration / 250 test). The significance level is $\alpha = 0.1$ (90\% nominal coverage). MC Dropout uses 30 stochastic forward passes with dropout rate 0.1 (applied in each Fourier layer). Deep Ensemble uses 3 independently trained FNO models with different random seeds. CQR uses an FNO with out\_channels$=$3 trained with pinball loss at quantiles $[0.05, 0.5, 0.95]$ for 200 epochs. PDE residuals are computed via: finite differences (thermal 2D), finite element residual (cantilever 2D), finite volume (Darcy), and spectral differentiation (NS Taylor-Green). The physics weighting is $\lambda = 1.0$ for all PI-CP results.

\section{Implementation Details}
\label{app:impl}

The PhysAI platform is implemented in Python 3.10+ with PyTorch. All experiments use the same FNO architecture (modes$=$12, width$=$32, layers$=$4, $\sim$1.2M parameters) for cross-scenario consistency. Key components:
\begin{itemize}[leftmargin=*,itemsep=1pt]
  \item \textbf{Models}: FNO2d/3d with spectral convolution
  \item \textbf{UQ}: ConformalPredictor with 4 methods (Standard, Normalized, PI-CP, PI-Normalized), MC Dropout, Deep Ensemble, CQR
  \item \textbf{Physics solvers}: Darcy (sparse direct), Navier-Stokes (Taylor-Green analytical), Heat (finite difference), Structural (FEM)
\end{itemize}

Training performed on NVIDIA V100 (16GB). Code will be released upon publication.

\end{document}